\pdfoutput=1

\documentclass[11pt]{article}

\usepackage[]{acl}

\usepackage{times}
\usepackage{latexsym}

\usepackage{stmaryrd}
\usepackage{xfrac}

\usepackage{amssymb}
\usepackage{graphicx}

\usepackage{inconsolata}

\usepackage[T1]{fontenc}

\usepackage[utf8]{inputenc}

\usepackage{microtype}

\usepackage{amsmath}
\usepackage{amssymb}
\usepackage{amsthm}
\usepackage{amsfonts}
\usepackage{mathtools}
\usepackage{enumitem}
\usepackage{relsize}
\usepackage{bbm}
\usepackage{booktabs}
\usepackage{tabularx}

\usepackage{emoji}
\usepackage{inconsolata}

\usepackage{thm-restate}
\usepackage{apxproof}

\newtheorem{theorem}{Theorem}

\newtheorem{lemma}{Lemma}

\newtheorem{cor}{Corollary}

\usepackage[plain]{algorithm}
\usepackage{algorithmicx}
\usepackage[noend]{algpseudocode}
\algnewcommand{\parState}[1]{\State%
    \parbox[t]{\dimexpr\linewidth-\algmargin}{\strut\hangindent=\algorithmicindent \hangafter=1 #1\strut}}
\algrenewcommand\algorithmicindent{1.0em}%
\newcommand{\algorithmicdowhile}{\textbf{do}:}
\algdef{SE}[DOWHILE]{Do}{doWhile}{\algorithmicdowhile}[1]{\algorithmicwhile\ #1}%
\newcommand{\algorithmicfunc}[1]{\textbf{def} #1 :}
\algdef{SE}[FUNC]{Func}{EndFunc}[1]{\algorithmicfunc{#1}}{}

\newif\ifboldnumber

\algrenewcommand\alglinenumber[1]{%
  \footnotesize\ifboldnumber\color{red}\bfseries\fi\global\boldnumberfalse#1:}

\MakeRobust{\Call}   

\newcommand{\rightcomment}[1]{{\color{gray} \(\triangleright\) {\footnotesize\textit{#1}}}}
\algrenewcommand{\algorithmiccomment}[1]{\hfill \rightcomment{#1}}  
\algnewcommand{\LineComment}[1]{\State \rightcomment{#1}}
\algnewcommand{\LinesComment}[1]{\State \rightcomment{\parbox[t]{\linewidth-\leftmargin-\widthof{\(\triangleright\) }}{#1}}}

\renewcommand\algorithmicthen{:}

\makeatletter
\ifthenelse{\equal{\ALG@noend}{t}}%
  {\algtext*{EndFunc}}
  {}%
\makeatother

\algnewcommand{\IIf}[1]{\State\algorithmicif\ #1\ \algorithmicthen}
\algnewcommand{\EndIIf}{\unskip}

\usepackage{cleveref}
\crefname{section}{\S}{\S\S}
\Crefname{section}{\S}{\S\S}
\crefname{table}{Tab.}{}
\crefname{figure}{Fig.}{}
\crefname{algorithm}{Alg}{}
\crefname{algorithm}{Alg}{}
\crefname{line}{Line}{}
\crefname{appendix}{App.}{}
\crefformat{section}{\S#2#1#3}
\crefname{theorem}{Theorem}{}
\crefname{proposition}{Proposition}{}
\crefname{definition}{Definition}{}
\crefname{lemma}{Lemma}{}
\crefname{cor}{Corollary}{}
\crefname{equation}{}{}

\setlength{\belowdisplayskip}{0pt} \setlength{\belowdisplayshortskip}{0pt}
\setlength{\abovedisplayskip}{0pt} \setlength{\abovedisplayshortskip}{0pt}
\usepackage[stable,hang,flushmargin]{footmisc}  

%
\setlength\titlebox{4.375cm}
%

\title{Exact \tikzmarknode{paired}{Paired}-\tikzmarknode{perm}{Permutation} Testing for Structured Test Statistics}

\newcommand{\ucambridge}{\emoji[emojis]{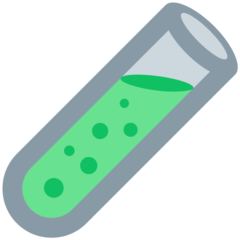}}
\newcommand{\ethz}{\emoji[emojis]{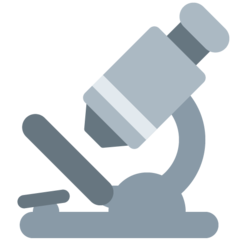}}
\newcommand{\jhu}{\emoji[emojis]{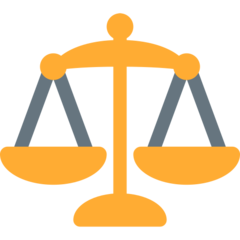}}

\author{
{\tikzmarknode{ranz}{Ran Zmigrod}\raise1.0ex\hbox{\normalfont\ucambridge}\raise1.0ex\hbox{\normalfont}}~\;~\tikzmarknode{timv}{Tim Vieira}\raise1.0ex\hbox{\normalfont\jhu}~\;~\tikzmarknode{ryan}{Ryan Cotterell}\raise1.0ex\hbox{\normalfont\ethz}
\\
  \raise1.0ex\hbox{\normalfont\ucambridge}\tikzmarknode{cam}{University of Cambridge}~\;~\raise1.0ex\hbox{\normalfont\jhu}\tikzmarknode{jhu}{Johns Hopkins University}~\;~\raise1.0ex\hbox{\normalfont\ethz}\tikzmarknode{eth}{ETH Z\"{u}rich} \\
  \href{mailto:rz279@cam.ac.uk}{\tt rz279@cam.ac.uk}~\;~\href{mailto:tim.f.vieira@gmail.com}{\tt tim.f.vieira@gmail.com} \\ \href{mailto:ryan.cotterell@inf.ethz.ch}{\tt ryan.cotterell@inf.ethz.ch}
}

\renewcommand{\ldots}{\ensuremath{{\ldotp\kern-0.2em\ldotp\kern-0.2em\ldotp}}}

\renewcommand{\cdots}{\ensuremath{{\cdotp\kern-0.2em\cdotp\kern-0.2em\cdotp}}}
\renewcommand{\dots}{\ensuremath{{\ldotp\kern-0.2em\ldotp\kern-0.2em\ldotp}}}

\newcommand{\checkNotation}[1]{{#1}}

\newcommand{\pvalue}{\checkNotation{p}}

\newcommand{\threshold}{\checkNotation{\alpha}}

\renewcommand{\ast}{\star}

\newcommand{\sys}[1]{\checkNotation{\mathit{#1}}}
\newcommand{\sysU}{\sys{U}}
\newcommand{\sysV}{\sys{V}}

\newcommand{\defn}[1]{\textbf{#1}}
\renewcommand{\th}[0]{^{\text{th}}}

\renewcommand{\setminus}[0]{\smallsetminus}
\newcommand{\defeq}[0]{\mathrel{\stackrel{\textnormal{\tiny def}}{=}}}

\newcommand{\bigo}[1]{\mathcal{O}(#1)}
\newcommand{\abs}[1]{\left\lvert #1 \right\rvert}
\newcommand{\case}[1]{\checkNotation{\noindent\emph{#1:}}}

\let\emptyset\varnothing

\newcommand{\tuple}[1]{\checkNotation{\!\left\langle #1 \right\rangle}}
\newcommand{\Set}[1]{\checkNotation{\{ #1 \}}}

\newcommand{\score}{\checkNotation{g}}
\newcommand{\hscore}{\checkNotation{h}}
\newcommand{\Hscore}[1]{\hscore\!\left(#1\right)}

\newcommand{\fone}{\checkNotation{F_1}}


\newcommand{\truep}[1]{\checkNotation{\mathrm{tp}\!\left( #1 \right)}}

\newcommand{\false}[1]{\checkNotation{\mathrm{in}\!\left( #1 \right)}}

\newcommand{\mat}[1]{\checkNotation{\mathbf{#1}}}

\newcommand{\zerovector}{\mat{0}}
\newcommand{\indicator}[1]{\checkNotation{{\mathbbm{1}\!\left[\normalcolor {#1}\right]}}}

\newcommand{\vouta}{\mat{{u}}}
\newcommand{\voutb}{\mat{{v}}}
\newcommand{\outan}{\checkNotation{{u}_n}}
\newcommand{\outbn}{\checkNotation{{v}_n}}


\newcommand{\Prob}[1]{\checkNotation{\mathbb{P}}\!\left[ #1 \right]}

\newcommand{\plusequal}{{\,\textsf{+=}\,}}

\newcommand{\nG}{\checkNotation{G}}
\newcommand{\nN}{\checkNotation{N}}

\newcommand{\nK}{\checkNotation{K}}

\newcommand{\real}{\checkNotation{\mathbb{R}}}

\newcommand{\algFace}[1]{\texttt{#1}}

\newcommand{\permtest}{\algFace{exact\_perm\_test}}
\newcommand{\permtestM}{\algFace{exact\_perm\_test}_{\algFace{m}}}

\newcommand{\montecarlo}{\algFace{monte\_carlo}}
\newcommand{\convolveSlow}{\algFace{convolve\_DP}}
\newcommand{\convolveFast}{\algFace{convolve\_FFT}}


\newcommand{\monstersum}[1]{{\sum\limits_{\mathclap{\substack{#1}}}}}  



\newcommand{\pmfn}[1]{\checkNotation{f_{#1}}}
\newcommand{\var}{\checkNotation{\xi}}

\newcommand{\obs}{\checkNotation{\overline{\var}}}

\newcommand{\convolve}{\mathop{\checkNotation{\ast}}}

\newcommand{\runtime}{\checkNotation{r}}
\newcommand{\spaceComp}{\checkNotation{s}}

\newcommand{\teststatistic}{\checkNotation{t}}

\newcommand{\randomvariable}[1]{\checkNotation{\mathrm{#1}}}
\newcommand{\RV}{\checkNotation{\randomvariable{Z}}}
\newcommand{\RVn}{\checkNotation{\randomvariable{Z}_n}}

\newcommand{\rvU}{\checkNotation{\randomvariable{\mathbf{U}}^{\emptyset}}}
\newcommand{\rvV}{\checkNotation{\randomvariable{\mathbf{V}}^{\emptyset}}}
\newcommand{\rvUn}{\checkNotation{\randomvariable{U}_n^{\emptyset}}}
\newcommand{\rvVn}{\checkNotation{\randomvariable{V}_n^{\emptyset}}}
\newcommand{\pmfvar}{\checkNotation{\mathrm{pmf}_{\RV}}}

\newcommand{\mF}[1]{\checkNotation{\mathrm{F}}_{#1}}
\newcommand{\domain}[1]{\checkNotation{\mathrm{dom}}\!\left( #1 \right)}

\newcommand{\varnStay}{\checkNotation{\overrightarrow{\var_n}}}
\newcommand{\varnSwap}{\checkNotation{\overleftarrow{\var_n}}}

\newcommand{\sumRV}{\randomvariable{S}}
\newcommand{\RVdom}{\checkNotation{\mathcal{S}}}

\usepackage{tikz}
\usetikzlibrary{calc,shapes}
\usetikzlibrary{tikzmark}
\usetikzlibrary{arrows}

\raggedbottom
\allowdisplaybreaks

\begin{document}
\maketitle

\begin{abstract}
Significance testing---especially the paired-permutation test---has played a vital role in developing NLP systems to provide confidence that the difference in performance between two systems (i.e., the test statistic) is not due to luck.
However, practitioners rely on Monte Carlo approximation to perform this test due to a lack of a suitable exact algorithm.
In this paper, we provide an efficient exact algorithm for the paired-permutation test for a family of structured test statistics.  Our algorithm runs in $\bigo{\nG\nN (\log \nG \nN) (\log \nN)}$ time where $\nN$ is the dataset size and $\nG$ is the range of the test statistic.
We found that our exact algorithm was $10$x faster than the Monte Carlo approximation with $20000$ samples on a common dataset.
\newline
\newline
\vspace{1.5em}
\hspace{.5em}\includegraphics[width=1.25em,height=1.25em]{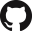}\hspace{.75em}\parbox{\dimexpr\linewidth-2\fboxsep-2\fboxrule}{\url{https://github.com/rycolab/paired-perm-test}}
\vspace{-.5em}
\end{abstract}

\section{Introduction}
How confident can we be that System $\sysU$ is more accurate than System $\sysV$? 
Questions of this form are widespread in natural language processing \citep{dietterich1998approximate, koehn-2004-statistical, OjalaG10, clark-etal-2011-better, berg-kirkpatrick12, dror-etal-2018-hitchhikers}
and statistical hypothesis testing provides answers \citep{lehman-book}. 
In this paper, we study the paired-permutation test \citep{good2000}---a commonly used hypothesis test in NLP because it makes no assumptions on the distribution of the data or the evaluation metric used to compare the two systems \citep{yeh-2000-accurate,dror-etal-2018-hitchhikers, dror2020, deutsch-etal-2021-statistical}.
The paired-permutation test checks whether a test statistic is significant by evaluating the probability that a value at least as large as the observed statistic would occur if system outputs were randomly swapped.
Thus, an exact algorithm for evaluating the paired-permutation test involves a summation over all $2^N$ possible swaps.
Without any assumptions on the test statistic, we can only exactly compute this sum in $\bigo{2^N}$ time.
Thus, practitioners often resort to running a Monte Carlo (MC) approximation which replaces the summation with $K \ll 2^N$ randomly sampled swaps.
Although the MC approximation is often practical, it unfortunately,introduces additional error when determining the significance of a test \citep{serlin2000testing, koehler2009assessment}.

\begin{figure}[t!]
\centering
\includegraphics[width=0.5\textwidth]{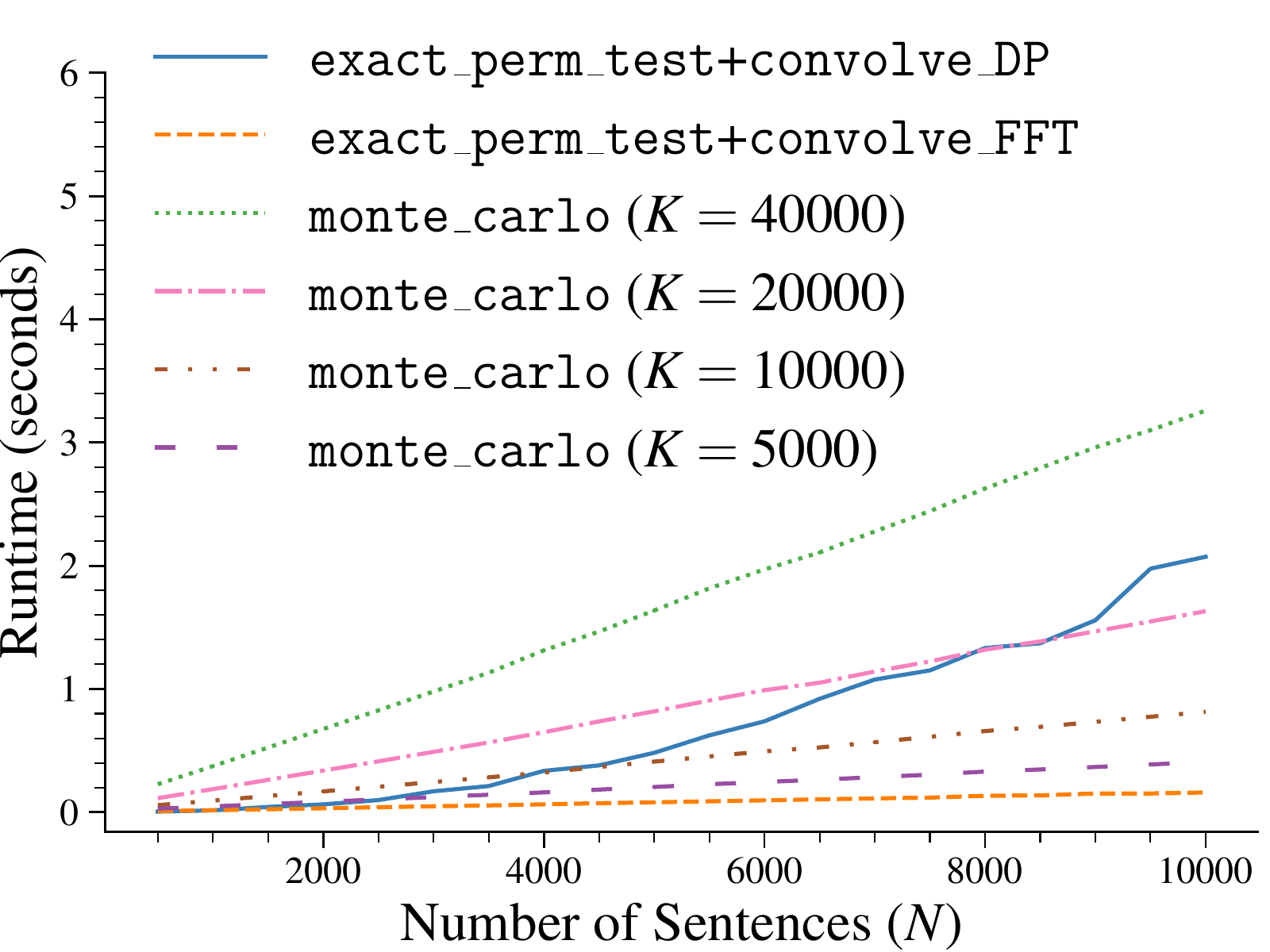}
\caption{Runtime comparison of $\permtest$ using $\convolveSlow$ and $\convolveFast$,  and $\montecarlo$ as a function of the number of entries in the dataset.
See for \cref{sec:experiment} for experimental details.\looseness=-1
}
\label{fig:runtime}
\end{figure}

This paper proposes a family of additively structured, integer-valued test statistics.
Test statistics of this form admit an efficient exact algorithm that leverages the fast Fourier transform \citep{cooley1965algorithm, cormen} to run in $\bigo{\nG\nN (\log \nG\nN) (\log \nN)}$ time 
where $\nN$ is the size of the dataset and $\nG$ is the range of the test statistic.
We compare the efficiency of our exact method to the MC approximation for comparing part-of-speech taggers on the Universal Dependency Dataset \citep{ud}.
Surprisingly, our \emph{exact} algorithm is faster than MC approximation: given $10000$ sentences, our algorithm is
$10$x faster than MC with $20000$ samples
and $3$x faster than MC with $5000$ samples,
taking $\approx$ 0.1 seconds.

\section{Paired-Permutation Testing}
The \defn{paired-permutation test} \citep{good2000}, the focus of this work, is a common null hypothesis
significance test that has a natural application to many problems in NLP \citep{peyrard20}. 
The test attempts to reject the null hypothesis, described below, at significance level $\threshold$; typically $\threshold = 0.05$.

\paragraph{Preliminaries.}
Suppose we want to compare the performances of two systems $\sysU$ and $\sysV$ 
where each system was evaluated on a dataset of the same $\nN$ entries.
We place the entries of $\sysU$ and $\sysV$ into a pair of arrays of length $\nN$ 
denoted $\vouta$ and $\voutb$.

\paragraph{The null hypothesis.}
The goal of a paired-permutation test is to test whether
the entries $\vouta$ and $\voutb$ are \emph{independent} of 
the labels $\sysU$ and $\sysV$ themselves.
The reason that this is the question we ought to care about is that, fundamentally, if the system label (in this case, $\sysU$ or $\sysV$) provides no information (in the sense of mutual information) about the entry, then we should \emph{not} prefer one system to another.
And, from basic information theory, we know that two random variables (RVs) have no mutual information iff they are independent.
So, independence of the system's label and the system's set of entries is the right thing to inquire about.
In the language of frequentist testing, 
the hypothesis that a system's labels and individual entries are independent is known as the \defn{null hypothesis}. And, under a paired-permutation test, the goal is to ascertain whether the data (the observed entries $\vouta$ and $\voutb)$ provide enough evidence to \emph{reject} the null hypothesis, i.e., to conclude that the label of a system shares information with the quality of its individual entries, and are indeed dependent.\looseness=-1

\paragraph{The null distribution.}
Next, in order to attempt to reject the null hypothesis, we require a distribution over (hypothetical) pairs of entries $\vouta'$ and $\voutb'$ whose individual entries are independent of the system labels $\sysU$ and $\sysV$, which is achieved
through the construction of RVs $\rvU$ and $\rvV$, whose joint distribution can be used to sample our hypothetical $\vouta'$ and $\voutb'$.
Traditionally, $\Prob{\rvU, \rvV}$ is referred to as the \defn{null distribution}.
A paired-permutation test provides a simple recipe for constructing such an RV pair.
The first step is to make an entry-wise independence assumption:
we define the joint probability $\Prob{\rvU, \rvV} \defeq \prod_{n=1}^N \Prob{\rvUn, \rvVn}$.
This means that the prediction a system makes for the $n\th$ entry is independent of the $m\th$ entry when $n\neq m$.
In the second step, we further define the entry-wise joint distribution as\looseness=-1
\begin{subequations}
\begin{align}
\displaystyle \Prob{\rvUn = \tikzmarknode{MarkD}{\outan}, \rvVn = \tikzmarknode{MarkA}{\outbn}} &\defeq \frac{1}{2} \; \quad\text{\color{gray}(stay)}\hspace{-10pt}\\
\displaystyle \Prob{\rvUn = \tikzmarknode{MarkB}{\outbn}, \rvVn = \tikzmarknode{MarkC}{\outan}} &\defeq \frac{1}{2} \; \quad\text{\color{gray}(swap)}\hspace{-10pt}
\end{align}
\end{subequations}
\begin{tikzpicture}[overlay,remember picture]
\draw[->,black!40, thick] 
  ([shift=({1.5mm,-1mm})]MarkD.south) -- ([shift=({-1.5mm,1mm})]MarkC.north); 
\draw[->,black!40, thick] 
  ([shift=({-1.5mm,-1mm})]MarkA.south) -- ([shift=({1.5mm,1mm})]MarkB.north);
\end{tikzpicture}
\!\!\!{\noindent}In words, $\Prob{\rvUn, \rvVn}$ is a uniform distribution over swapping $\sysU$ and $\sysV$'s prediction for the $n\th$ entry.
All in all, this definition of $\Prob{\rvU, \rvV}$ as the null distribution gives us a uniform distribution over all $2^{\nN}$ ways swapping of the labels and the individual entries of the observed predictions $\vouta$ and $\voutb$.
And, importantly, the joint distribution $\Prob{\rvU, \rvV}$, encodes the fact that the sampled entries are independent of the system label.\looseness=-1

\paragraph{The test statistic and the $\pvalue$-value.}
The final ingredient we need in a null hypothesis test is a \defn{test statistic}, whose job it is to provide a summary of samples $(\vouta', \voutb') \sim \Prob{\rvU, \rvV}$ and thereby facilitate comparison of samples from the null distribution $\Prob{\rvU, \rvV}$ and the observed entries $(\vouta, \voutb)$. 
In this work, we will define a test statistic as function $\teststatistic(\vouta, \voutb)$.
In principle, we can choose \emph{any}
test statistic $\teststatistic$ that allows us to distinguish $\vouta$ and $\voutb$, i.e., we have have $\teststatistic(\vouta, \voutb) = 0 \iff \vouta = \voutb$.
Now, given observed entries $\vouta$ and $\voutb$, the $\pvalue$-value is defined as\looseness=-1
\begin{equation}
    \pvalue = \Prob{\teststatistic(\rvU, \rvV) \geq \obs}
\end{equation}
where $\obs\defeq\teststatistic(\vouta, \voutb)$ is the \defn{observed effect}.
In words, the $\pvalue$-value is the probability of observing a test statistic $\teststatistic(\vouta', \voutb')$ with a value as large as $\teststatistic(\vouta, \voutb)$ where $(\vouta', \voutb') \sim \Prob{\rvU, \rvV}$ are sampled from the null distribution.
Recall that the system labels and entries are independent under the null distribution
by construction, so the $\pvalue$-value tells us, under the independence assumption, how likely such a large test statistic would have been observed.
The test says that we have sufficient evidence to reject the null hypothesis when $\pvalue < \threshold$. 
These concepts are depicted in \cref{fig:pmf}.

\begin{figure}[h]
\begin{tikzpicture}[y=4cm,
bar/.pic={
    \fill (-.1,0) rectangle (.1,#1) (0,#1) node[above,scale=1/2]{};
  }
]
\draw
(0,0) edge[black!30] (6,0)
(0,0) edge[black!30,midway] node[xshift=-1em] {{\footnotesize $\Prob{\teststatistic}$}} (0,6/16)
foreach[count=\i] ~ in {.2/16,.5/16,1/16,3/16,5/16,6/16,5/16,3/16,1/16,.5/16,.2/16}{
    (\i/2,0) pic[blue!30!gray]{bar=~} node[below]{}
}
foreach[count=\i] ~ in {.2/16,.5/16,1/16,3/16,5/16,6/16,5/16,3/16}{
    (\i/2,0) pic[gray!20]{bar=~} node[below]{}
};
\node[black!40] at (4.25,-.75em) {{\footnotesize $\obs$}};
\draw (4.25,0) edge[black,dotted] (4.25,7/16);
\node[black!30] at (6.2,0) {$\teststatistic$};
\node at (5.4,6/16) {$\pvalue = \sum_{t \ge \obs} {\tikz \fill[blue!30!gray] (0,0) rectangle (0.2,0.1);}_t$};
\end{tikzpicture}
\caption{Depiction of the pmf of the test statistic $t$, the $\pvalue$-value, and the observed value $\obs$.}
\label{fig:pmf}
\end{figure}

\section{Structured Test Statistics}
We now discuss a common special case of the paired-permutation test where the test statistic has a particular structure.
In \cref{sec:algs}, we show how to exploit this structure to develop an efficient algorithm to exactly compute the test.
The specific assumption we make is that the test statistic is an integer-valued \defn{additively decomposable  function}.
Formally, this assumption means that we can rewrite $\teststatistic$ as follows\looseness=-1
\begin{equation}\label{eq:subtraction}
    \teststatistic(\vouta, \voutb) 
    \defeq \hscore\!\left(\boldsymbol{\score}(\vouta, \voutb)\right)
\end{equation}
for any function $\hscore$ and additively decomposable function $\boldsymbol{\score}(\vouta, \voutb) \!\defeq\! \sum_{n=1}^\nN \score(\outan, \outbn)$ such that $\score$ is an integer-valued function with a range of size $\bigo{\nG}$.
The structure of \cref{eq:subtraction} will allow us to derive an efficient algorithm for evaluating $\Prob{\teststatistic(\rvU, \rvV)} \!=\! \Prob{\hscore\!\left(\sum_{n=1}^\nN \score(\outan, \outbn)\right)}$. 
We now dissect this equation.
Each summand $\score(\outan, \outbn)$ can take on one of two values 
$\varnStay\!\defeq\!\score(\outan, \outbn)$ and $\varnSwap\!\defeq\!\score(\outbn, \outan)$ 
with equal probability.  We rewrite the sum $\sum_{n=1}^\nN \score(\outan, \outbn)$ as
$\sumRV\!\defeq\!\sum_{n=1}^\nN \RVn$ where $\RVn$ are uniform RVs over the set $\Set{\varnStay, \varnSwap}$. Each $\RVn$ has  probability mass function (PMF)
\begin{subequations}
\begin{align}
   \pmfn{n}(z) &\defeq \pmfvar(\varnStay, \varnSwap)(z) \\
   &\defeq \label{eq:pmf}
   \begin{cases}
        \frac{1}{2}\indicator{z \in\Set{\varnStay, \varnSwap}} & \textbf{if } \varnStay \ne \varnSwap \\
        \indicator{z = \varnStay} & \textbf{otherwise}
    \end{cases}
\end{align}
\end{subequations}

\noindent The domain of each PMF, $\domain{\pmfn{n}}$, contains at most two elements.
Let $\RVdom\!\defeq\!\domain{\sumRV}$.
Clearly, $\abs{\RVdom} = \bigo{\nG \nN}$ as we have a sum over $\nN$ RVs $\RVn$ each with domain size $\bigo{\nG}$.  The following theorem shows that we can evaluate 
$\Prob{\teststatistic(\rvU, \rvV)}$ from the distribution of $\sumRV$, which we we will show in the next section is efficient to compute.

\begin{theorem}\label{thm:statistic}
For any test statistic $\teststatistic$ that factorizes as in \cref{eq:subtraction} with $\hscore$ and $\score$,
the distribution of the test statistic under the null distribution decomposes as\looseness=-1
\begin{equation}
    \Prob{\teststatistic(\rvU, \rvV)}
    = \Prob{\hscore(\sumRV)} 
\end{equation}
\end{theorem}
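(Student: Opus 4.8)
The plan is to prove the two PMFs coincide by showing the random variables $\teststatistic(\rvU, \rvV)$ and $\hscore(\sumRV)$ have the same distribution, peeling off the factorization one layer at a time. First I would substitute the definition: by \cref{eq:subtraction}, $\teststatistic(\rvU, \rvV) = \hscore(\boldsymbol{\score}(\rvU, \rvV)) = \hscore\!\left(\sum_{n=1}^\nN \score(\rvUn, \rvVn)\right)$. Since applying a fixed deterministic function to equal-in-distribution inputs produces equal-in-distribution outputs (hence identical PMFs), it suffices to show that $\sum_{n=1}^\nN \score(\rvUn, \rvVn)$ and $\sumRV = \sum_{n=1}^\nN \RVn$ have the same distribution.

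The crux is a per-entry computation. Fix $n$. Under the null distribution, $(\rvUn, \rvVn)$ is uniform on $\Set{\tuple{\outan, \outbn}, \tuple{\outbn, \outan}}$, so the pushforward $\score(\rvUn, \rvVn)$ assigns mass $\tfrac12$ to $\varnStay \defeq \score(\outan, \outbn)$ and mass $\tfrac12$ to $\varnSwap \defeq \score(\outbn, \outan)$. I would case-split exactly as \cref{eq:pmf} does: when $\varnStay \ne \varnSwap$ these are two distinct atoms each of mass $\tfrac12$; when $\varnStay = \varnSwap$ the atoms collapse and all mass sits on $\varnStay$. In both cases the law of $\score(\rvUn, \rvVn)$ is precisely $\pmfn{n}$, i.e., $\score(\rvUn, \rvVn)$ and $\RVn$ have the same distribution.

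Finally I would lift this to the sum. Because the null distribution factorizes as $\Prob{\rvU, \rvV} = \prod_{n=1}^\nN \Prob{\rvUn, \rvVn}$, the family $\Set{\score(\rvUn, \rvVn)}_{n=1}^\nN$ is mutually independent---each member depends on a single independent block $(\rvUn, \rvVn)$---and the $\Set{\RVn}_{n=1}^\nN$ are independent by construction. The distribution of a finite sum of independent random variables is the convolution of the summand distributions, which depends only on those marginals; since the marginals match entrywise, $\sum_{n=1}^\nN \score(\rvUn, \rvVn)$ and $\sumRV$ have the same distribution. Chaining this with the first paragraph gives $\Prob{\teststatistic(\rvU, \rvV)} = \Prob{\hscore(\sumRV)}$.

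I do not anticipate a substantive obstacle: the statement is essentially a change-of-variables bookkeeping identity. The only two points that warrant care are (i) the degenerate case $\varnStay = \varnSwap$, where one must avoid double-counting the shared atom---which is exactly why \cref{eq:pmf} is written with two branches---and (ii) making explicit that equality in distribution is preserved under the outer map $\hscore$, regardless of whether $\hscore$ is injective, so that collisions in $\hscore$'s output are correctly aggregated on both sides.
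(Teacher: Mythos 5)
Your proposal is correct and follows essentially the same route as the paper's proof, which simply chains the equalities $\Prob{\teststatistic(\rvU, \rvV)} = \Prob{\hscore(\sum_n \score(\rvUn, \rvVn))} = \Prob{\hscore(\sum_n \RVn)} = \Prob{\hscore(\sumRV)}$. You merely make explicit the justifications the paper leaves implicit --- the per-entry distributional identity $\score(\rvUn, \rvVn) \sim \pmfn{n}$ (including the collapsed case $\varnStay = \varnSwap$), the independence across entries, and the preservation of equality in distribution under the outer map $\hscore$ --- all of which are sound.
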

\begin{proof}
\begin{subequations}
\begin{align}
\!\! \Prob{ \teststatistic(\rvU, \rvV) }  
    &=  \Prob{\hscore\!\left(\sum_{n=1}^{\nN} \score(\rvUn, \rvVn)\right)}  \\
    &=  \Prob{ \hscore\!\left(\sum_{n=1}^{\nN}  \RVn \right)} \\
    &= \Prob{\hscore(\sumRV)} 
\end{align}
\end{subequations}
\end{proof}

\paragraph{Example.}
A common test statistic is the difference in accuracy, in which each entry $\outan\in\Set{1,\dots,C}^\nG$ where 
$C$ is the number of classes and in this case, $\nG$ is the maximum length of an entry sequence (or one if each entry has a binary accuracy value).
Then $\score(\outan, \outbn)\in\Set{-\nG,\dots,\nG}$ is the difference in the number of correct predictions between individual entries $\outan$ and $\outbn$.
We can additionally define the function $\hscore$ as either $\hscore(x)=x$ or $\hscore(x)=\abs{x}$ depending on whether we want a one-tailed or two-tailed significance test.

\begin{algorithm}[t]
    \centering
    \begin{algorithmic}[1]
    \Func{$\montecarlo(\vouta, \voutb, \score, \hscore, \nK)$}
        \For{$n=1 \textbf{ to } \nN$}
        \State $\varnStay \gets \score(\tikzmarknode{AA}{\outan}, \tikzmarknode{CC}{\outbn})$ \Comment{Local effect (stay)} \vspace{2pt}
        \State $\varnSwap \gets \score(\tikzmarknode{DD}{\outbn}, \tikzmarknode{BB}{\outan})$ \Comment{Local effect (swap)}
        \State $\pmfn{n} \gets \pmfvar\left(\varnStay, \varnSwap \right)$  
    \EndFor
\begin{tikzpicture}[overlay,remember picture]
\draw[->,black!40, thick] 
  ([shift=({1.5mm,-1mm})]CC.south) -- ([shift=({-1.5mm,1mm})]DD.north); 
\draw[->,black!40, thick] 
  ([shift=({-1.5mm,-1mm})]AA.south) -- ([shift=({1.5mm,1mm})]BB.north);
\end{tikzpicture}
    \State $\obs \gets \Hscore{{\displaystyle\sum_{n=1}^{\nN} \varnStay}}$  \Comment{Compute observed effect} 
    \LinesComment{Sample $K$ random \emph{stay} or \emph{swap} actions from each local pmf $\pmfn{n}$.}
    \State $z^{(k)}_n \sim \pmfn{n}$  \ \ \textbf{for} $n = 1 \textbf{ to } \nN, k = 1 \textbf{ to } \nK$
    \State \Return ${\displaystyle \frac{1}{\nK} \sum_{k=1}^\nK \indicator{\,  \Hscore{ \sum_{n=1}^\nN z_n^{(k)} }  \ge \obs\,}}$
    \EndFunc
    \end{algorithmic}
    \caption{Monte Carlo approximation algorithm for the paired-permutation test.}
    \label{alg:mc}
\end{algorithm}

\paragraph{A Monte Carlo paired-permutation test.}\label{sec:mc}
To the best of our knowledge, no practical \emph{exact} algorithm for the paired-permutation test has been given in the literature.
Thus, most practical implementations of the paired-permutation test use an MC
approximation, whereby one randomly samples from $\sumRV$ to approximate $\Prob{\rvU, \rvV}$.
We give this MC algorithm as $\montecarlo$ in \cref{alg:mc} which runs in $\bigo{\nK \nN}$ time where $\nK$ is the number of samples taken.\looseness=-1

\section{An Exact Paired-Permutation Test}\label{sec:algs}
In this section, we describe two exact, efficient algorithms for computing the $\pvalue$-value under the paired-permutation test for any structured test statistic (see \cref{eq:subtraction}).\footnote{In \cref{app:doad}, we extend our algorithms to work with test statistics that use $m$ additively decomposable scoring functions, e.g., difference in $\fone$ scores.}
Our algorithms hinge on an important theorem in probability: The PMF of the sum of independent events is the convolution of their individual PMFs \citep[p. 252]{ross}.
Let $\pmfn{\sumRV}$ denote the PMF of $\sumRV$. Since RVs $\RVn$ are independent, we have that
\begin{subequations}
\begin{align}
    &\hspace{-15pt}\Prob{\hscore(\sumRV)\ge\obs} \\
    = &\sum_{\var\in\RVdom} \pmfn{\sumRV}(\var) \,\indicator{\hscore(\var))\ge\obs} \\
    = &\sum_{\var\in\RVdom} (\pmfn{1}\convolve\cdots\convolve\pmfn{\nN})(\var)\, \indicator{\hscore(\var))\ge\obs} \label{eq:convolve-breakdown}
\end{align}
\end{subequations}
where $\convolve$ is the discrete \defn{convolution operator}.
For functions $\pmfn{i},\pmfn{j} \in \RVdom \to \real$, $\pmfn{i} \convolve \pmfn{j} \in \RVdom \to \real$ is given by the following expression
\begin{equation}
(\pmfn{i} \ast \pmfn{j})(\var) \defeq \sum_{\var' \in \RVdom} \pmfn{i}(\var') \, \pmfn{j}(\var - \var')
\end{equation}
Pseudocode for this algorithm is given as $\permtest$ in \cref{alg:permtest}. We omit the details of evaluating the convolution in $\permtest$ and discuss methods for efficient convolution in the remainder of this section.

\begin{algorithm}[t]
    \centering
    \begin{algorithmic}[1]
    \Func{$\permtest(\vouta, \voutb, \score, \hscore)$}
    
    \For{$n=1 \textbf{ to } \nN$}
        \State $\varnStay \gets \score(\tikzmarknode{AA}{\outan}, \tikzmarknode{CC}{\outbn})$ \Comment{Local effect (stay)} \vspace{2pt}
        \State $\varnSwap \gets \score(\tikzmarknode{DD}{\outbn}, \tikzmarknode{BB}{\outan})$ \Comment{Local effect (swap)}
        \State $\pmfn{n} \gets \pmfvar\left(\varnStay, \varnSwap \right)$  
    \EndFor
\begin{tikzpicture}[overlay,remember picture]
\draw[->,black!40, thick] 
  ([shift=({1.5mm,-1mm})]CC.south) -- ([shift=({-1.5mm,1mm})]DD.north); 
\draw[->,black!40, thick] 
  ([shift=({-1.5mm,-1mm})]AA.south) -- ([shift=({1.5mm,1mm})]BB.north);
\end{tikzpicture}

    \State $\obs \gets \Hscore{{\displaystyle\sum_{n=1}^{\nN} \varnStay}}$  \Comment{Compute observed effect}  
    \State $\pmfn{\sumRV} \gets \pmfn{1} \ast  \cdots \ast \pmfn{\nN}$ \Comment{Convolve the $f_n$'s} \label{line:convolve}
    \LineComment{Sum-up the pmf to get \pvalue}
    \State \Return ${\displaystyle\monstersum{ \var \in \RVdom} \pmfn{\sumRV}(\var) \,\indicator{\Hscore{\var} \ge \obs\,}}$  \label{line:p-accum}
    \EndFunc
\end{algorithmic}
\caption{Compute the exact $\pvalue$ value for the paired-permutation test for structured test statistics.}
\label{alg:permtest}
\end{algorithm}

\begin{theorem}\label{thm:exact}
For any two entries, $\vouta$ and $\voutb$, 
and test statistic $\teststatistic$ that factorizes as in \cref{eq:subtraction} with $\hscore$ and $\score$, $\permtest(\vouta, \voutb, \score, \hscore)$ returns $\pvalue$ in $\bigo{\nG\nN {+} \runtime(\nG, \nN)}$ time, $\bigo{\nN {+} \spaceComp(\nG, \nN)}$ space. We define $\runtime(\nG, \nN)$ and $\spaceComp(\nG, \nN)$ as the time and space complexities for constructing $\pmfn{1}\convolve\cdots\convolve\pmfn{\nN}$.\looseness=-1
\end{theorem}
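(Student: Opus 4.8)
The plan is to show that Algorithm~\ref{alg:permtest} faithfully computes the identity $\pvalue = \Prob{\hscore(\sumRV) \ge \obs}$ and then to tally the cost of each step, leaving only the convolution's cost abstract.

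For correctness, I would first check that the observed effect is set correctly: in the ``stay'' configuration every $\RVn$ equals $\varnStay = \score(\outan, \outbn)$, so $\sum_{n=1}^{\nN} \varnStay = \boldsymbol{\score}(\vouta, \voutb)$, and the algorithm therefore sets $\obs = \hscore(\boldsymbol{\score}(\vouta, \voutb)) = \teststatistic(\vouta, \voutb)$, which matches the definition of $\obs$. Next, \cref{thm:statistic} gives $\Prob{\teststatistic(\rvU, \rvV) \ge \obs} = \Prob{\hscore(\sumRV) \ge \obs}$, and since the $\RVn$ are independent, the convolution theorem for PMFs cited in the text yields $\pmfn{\sumRV} = \pmfn{1} \convolve \cdots \convolve \pmfn{\nN}$, so Line~\ref{line:convolve} assigns the true PMF of $\sumRV$. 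Finally, $\Prob{\hscore(\sumRV) \ge \obs} = \sum_{\var \in \RVdom} \pmfn{\sumRV}(\var)\,\indicator{\hscore(\var) \ge \obs}$, which is exactly the quantity returned on Line~\ref{line:p-accum}; hence the output is $\pvalue$.

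For the complexity, I would walk through the algorithm step by step. The loop over the $\nN$ entries performs $2\nN$ evaluations of $\score$; assuming each call runs in $\bigo{\nG}$ time (e.g., comparing two length-$\nG$ label sequences, as in the accuracy example), this costs $\bigo{\nG\nN}$ time, and since each local PMF $\pmfn{n}$ has support of size at most two, the loop uses $\bigo{\nN}$ space. Computing $\obs$ then adds $\nN$ integers and applies $\hscore$ once, for $\bigo{\nN}$ more time. Line~\ref{line:convolve} costs $\runtime(\nG, \nN)$ time and $\spaceComp(\nG, \nN)$ space by definition; here I would note that, as argued just before the theorem, $\abs{\RVdom} = \bigo{\nG\nN}$ because $\sumRV$ is a sum of $\nN$ random variables each with domain of size $\bigo{\nG}$, so merely writing down $\pmfn{\sumRV}$ already requires $\bigo{\nG\nN}$ cells and hence $\spaceComp(\nG, \nN) = \Omega(\nG\nN)$, which absorbs storing the result. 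Line~\ref{line:p-accum} iterates once over $\RVdom$ doing $\bigo{1}$ work per element (one evaluation of $\hscore$, one comparison, one conditional addition), for $\bigo{\nG\nN}$ time. Summing the per-step bounds gives $\bigo{\nG\nN + \runtime(\nG, \nN)}$ time and $\bigo{\nN + \spaceComp(\nG, \nN)}$ space, as claimed.

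The argument is largely bookkeeping, so the only real obstacle is being precise about where the $\bigo{\nG\nN}$ term originates: it has to simultaneously dominate the cost of reading the entries through $\score$, the size $\abs{\RVdom}$ of the support of $\sumRV$, and therefore the cost of the final accumulation on Line~\ref{line:p-accum}. Once these are pinned down, $\runtime(\nG, \nN)$ and $\spaceComp(\nG, \nN)$ are the only quantities left unspecified, and they will be instantiated in the remainder of the section by the two convolution subroutines $\convolveSlow$ and $\convolveFast$.
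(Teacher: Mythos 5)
Your proposal is correct and follows essentially the same route as the paper: correctness via \cref{thm:statistic} together with the convolution identity in \cref{eq:convolve-breakdown}, followed by a line-by-line cost tally in which \cref{line:convolve} is charged $\runtime(\nG,\nN)$ time and $\spaceComp(\nG,\nN)$ space and \cref{line:p-accum} is charged $\bigo{\nG\nN}$ time. The only (harmless) divergence is that you charge $\bigo{\nG}$ per evaluation of $\score$ whereas the paper treats the setup loop as $\bigo{\nN}$; both accountings are absorbed into the stated $\bigo{\nG\nN + \runtime(\nG,\nN)}$ bound.
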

\begin{proof}
The correctness of $\permtest$ is by \cref{thm:statistic} and \cref{eq:convolve-breakdown}.
All lines except for \cref{line:convolve} and \cref{line:p-accum} require at most $\bigo{\nN}$ time and space.
\cref{line:p-accum} runs in $\bigo{\nG\nN}$ time and $\bigo{1}$ space.
Thus, $\permtest$ runs in $\bigo{\nN + \nG\nN + \runtime(\nG, \nN)}$ time and $\bigo{\nN + \spaceComp(\nG, \nN)}$ space.
\end{proof}

The computational question is then: What is the most efficient algorithm for evaluating $(\pmfn{1}\convolve\cdots\convolve\pmfn{\nN})(\var)$?
In the following two subsections, we present $\bigo{\nG\nN^2}$ time and $\bigo{\nG (\log \nG \nN) (\log \nN)}$ time algorithms for performing this $\nN$-fold convolution.

\subsection{Convolution by Dynamic Programming}
Our first approach builds a dynamic program (DP) that takes advantage of the sparsity of our RVs to efficiently construct the PMF $\pmfn{\sumRV}$.
We do this by constructing a PMF array $\mF{n}\in\real^{\RVdom}$ for $n\in\Set{0,\dots,\nN}$ (we use $n=0$ as an initialisation base case) such that $\mF{n}(\var) = (\pmfn{1} \convolve \cdots \convolve \pmfn{n})(\var)$.
As we apply each convolution, we know that $\pmfn{n}$ is only non-zero at $\varnStay$ and $\varnSwap$, and so we can run each convolution in $\bigo{\nG\nN}$ time.
The pseudocode for this approach is given as $\convolveSlow$ in \cref{alg:convolve-slow}.

\begin{algorithm}[t]
   \centering
    \begin{algorithmic}[1]
    \Func{$\convolveSlow(\pmfn{1},\dots,\pmfn{\nN})$}
    \State $\mF{} \gets \zerovector$
    \State $\mF{0}(0) \gets 1$  \Comment{Init: $\Prob{0}\!=\!1$ if $\nN\!=\!0$}
    \For{$n=1 \textbf{ to } \nN$}
    \LinesComment{Compute $\mF{n} = \pmfn{n} \star \mF{n-1}$}
    \For{$\var \in \domain{\pmfn{n}}$} \Comment{$\abs{\domain{\pmfn{n}}}\le 2$}
    \For{$\var' \in \domain{\mF{n-1}}$} \label{line:for-start}
    \State $\mF{n}(\var + \var') \plusequal \pmfn{n}(\var) \cdot \mF{n-1}(\var')$ \label{line:for-end} 
    \EndFor
    \EndFor
    \EndFor
    \State \Return $\mF{\nN}$
    \EndFunc
    \end{algorithmic}
    \caption{Dynamic programming algorithm to compute the pmf of $\sumRV$ 
    as the $\nN$-fold convolution
    of the pmfs $\pmfn{1},\dots,\pmfn{\nN}$.
    Note that we only need to store $\mF{n}$ and $\mF{n-1}$ at any given time.}
    \label{alg:convolve-slow}
\end{algorithm}

\begin{theorem}
For any RVs $\RV_1,\dots,\RV_{\nN}$ with PMFs $\pmfn{1},\dots,\pmfn{\nN}$, $\convolveSlow(\pmfn{n},\dots,\pmfn{\nN})$ returns $\pmfn{1} \convolve \cdots \convolve \pmfn{n}$ in $\bigo{\nG\nN^2}$ time, $\bigo{\nG \nN}$ space.
\end{theorem}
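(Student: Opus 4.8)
The plan is to verify the three claims in turn: correctness by induction on the outer-loop index, and the two complexity bounds by direct bookkeeping. For correctness I would show, by induction on $n\in\Set{0,\dots,\nN}$, that immediately after iteration $n$ of the outer loop the array satisfies $\mF{n}(\var)=(\pmfn{1}\convolve\cdots\convolve\pmfn{n})(\var)$ for every $\var\in\RVdom$. The base case $n=0$ holds because the empty convolution is the point mass at $0$, which is exactly what the initialization $\mF{}\gets\zerovector$ followed by $\mF{0}(0)\gets 1$ produces. For the inductive step I would observe that the nested loops visit exactly the pairs $(\var,\var')$ with $\var\in\domain{\pmfn{n}}$ and $\var'\in\domain{\mF{n-1}}$, performing $\mF{n}(\var+\var')\plusequal\pmfn{n}(\var)\cdot\mF{n-1}(\var')$; since all entries outside these supports are $0$ and contribute nothing, the value accumulated at each point $\var''$ is $\sum_{\var+\var'=\var''}\pmfn{n}(\var)\,\mF{n-1}(\var')=(\pmfn{n}\convolve\mF{n-1})(\var'')$. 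Substituting the inductive hypothesis and using associativity of $\convolve$ gives $\mF{n}=\pmfn{1}\convolve\cdots\convolve\pmfn{n}$, and taking $n=\nN$ is the claim.

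For the running time, the outer loop executes $\nN$ times, and iteration $n$ performs $\abs{\domain{\pmfn{n}}}\cdot\abs{\domain{\mF{n-1}}}$ constant-time $\plusequal$ updates, together with the $\bigo{\nG\nN}$ work of zeroing the reused array before $\mF{n}$ is filled. By \cref{eq:pmf}, $\abs{\domain{\pmfn{n}}}\le 2$, and since $\mF{n-1}$ is supported inside $\RVdom$ we have $\abs{\domain{\mF{n-1}}}\le\abs{\RVdom}=\bigo{\nG\nN}$: each $\RVn$ has a domain of $\bigo{\nG}$ consecutive integers, so $\RV_1+\cdots+\RV_{n-1}$ lies in an integer interval of width $\bigo{\nG(n-1)}=\bigo{\nG\nN}$. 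Hence each outer iteration costs $\bigo{\nG\nN}$ and, adding the one-time $\bigo{\nG\nN}$ initialization of $\mF{}$, the total is $\bigo{\nG\nN^2}$.

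For the space bound I would invoke the observation already flagged in the caption of \cref{alg:convolve-slow}: computing $\mF{n}$ needs only $\mF{n-1}$, so at most two arrays indexed over $\RVdom$ are alive at once, each of size $\bigo{\nG\nN}$, plus $\bigo{\nN}$ scratch for the local effects $\varnStay,\varnSwap$. I do not expect a real obstacle here; the one point that needs care is the estimate $\abs{\RVdom}=\bigo{\nG\nN}$, which has to treat $\RVdom$ as an integer interval of width $\bigo{\nG\nN}$ (so that negative scores, as in the accuracy-difference example, are covered) rather than a set of the form $\Set{0,\dots,\bigo{\nG\nN}}$; this changes only constants, not the asymptotics.
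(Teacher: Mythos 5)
Your proposal is correct and follows essentially the same route as the paper: induction on the outer-loop index to show $\mF{n}=\pmfn{1}\convolve\cdots\convolve\pmfn{n}$ (the paper does this in its appendix lemma), then the $2\cdot\bigo{\nG\nN}$ per-iteration update count for time and the two-live-arrays observation for space. Your explicit justification that $\abs{\RVdom}=\bigo{\nG\nN}$ via the sumset of integer intervals is a welcome bit of extra care, but it does not change the argument.
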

\begin{proof}
The proof of correctness of $\convolveSlow$ is given in \cref{app:dp}.
Each $\mF{n}$ has $\bigo{\nG \nN}$ elements and so $\convolveSlow$ clearly runs in $\bigo{\nG \nN^2}$ time.
Furthermore, at any iteration, we only require $\mF{n}$ and $\mF{n-1}$ and so we can execute $\convolveSlow$ in $\bigo{\nG \nN}$ space.
\end{proof}

\subsection{Convolution by FFT}
Our second approach uses the fast Fourier transform \citep[FFT;][]{cooley1965algorithm, cormen} to evaluate the convolutions.
Using this method means that each convolution takes $\bigo{\nG \nN \log \nG \nN}$ time $\bigo{\nG \nN}$ space.
We further exploit the commutativity of convolution to perform the $\nN$-fold convolution in $\log \nN$ convolutions using a recursive program.
The pseudocode for this approach is given as $\convolveFast$ in \cref{alg:convolve-fast}.

\begin{algorithm}[t]
    \centering
    \begin{algorithmic}[1]
    \Func{$\convolveFast(\pmfn{1},\dots,\pmfn{\nN})$}
    \If{$\nN = 1$}  \Return $\pmfn{1}$
    \EndIf
    \State \Return $\convolveFast(\pmfn{1},\dots,\pmfn{\nN \sslash 2})$
    \Statex \hfill $\convolve \convolveFast(\pmfn{\nN \sslash 2 + 1},\dots,\pmfn{\nN}) $
    \EndFunc
    \end{algorithmic}
    \caption{Recursive algorithm to compute the pmf of $\sumRV$ 
    as the $\nN$-fold convolution
    of the pmfs $\pmfn{1},\dots,\pmfn{\nN}$.
    Here the convolution operation ($\convolve$) runs in $\bigo{\nG\nN \log\nG\nN}$ time
    thanks to the FFT \citep{cooley1965algorithm}.
    }
    \label{alg:convolve-fast}
\end{algorithm}

\begin{theorem}\label{thm:convolve-fast}
For any RVs $\RV_1,\dots,\RV_{\nN}$ with PMFs $\pmfn{1},\dots,\pmfn{\nN}$,
$\convolveFast(\pmfn{1},\dots,\pmfn{\nN})$ returns $\pmfn{1} \convolve \cdots \convolve \pmfn{n}$ in $\bigo{\nG\nN (\log \nG\nN) (\log \nN)}$ time, $\bigo{\nG \nN \log \nN}$ space.\footnote{We note that the $\log \nN$ factor in the space complexity may be eliminated by tail recursion elimination \citep{muchnick}.}\looseness=-1
\end{theorem}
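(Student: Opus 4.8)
The plan is to establish correctness by induction on $\nN$ and then charge the running time and the space to the recursion tree of $\convolveFast$.

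For correctness I would induct on $\nN$. The base case $\nN=1$ is immediate, since $\convolveFast$ returns $\pmfn{1}$. For $\nN\ge 2$, the inductive hypothesis says the two recursive calls return $\pmfn{1}\convolve\cdots\convolve\pmfn{\nN\sslash 2}$ and $\pmfn{\nN\sslash 2+1}\convolve\cdots\convolve\pmfn{\nN}$; since $\convolve$ is associative, convolving these two arrays yields $\pmfn{1}\convolve\cdots\convolve\pmfn{\nN}$, as claimed. (Associativity alone suffices for correctness; the balanced split buys only efficiency, discussed next.)

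For the running time I would first fix the FFT cost model: the convolution of an array with $a$ entries and an array with $b$ entries has $a+b-1$ entries, and — after zero-padding both inputs to the next power of two, which is $\bigo{a+b}$ — the FFT computes it in $\bigo{(a+b)\log(a+b)}$ time. The recursion tree of $\convolveFast$ is a near-balanced binary tree of depth $\bigo{\log\nN}$ whose leaves are the $\pmfn{n}$. A node at depth $d$ (root at depth $0$) convolves $\Theta(\nN/2^d)$ of the $\pmfn{n}$'s; because every $\RVn$ takes values in an interval of length $\bigo{\nG}$, the array produced at that node has $\bigo{\nG\nN/2^d}$ entries and is built from a single FFT-convolution of two arrays each with $\bigo{\nG\nN/2^{d+1}}$ entries. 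By the cost model that node costs $\bigo{(\nG\nN/2^d)\log(\nG\nN/2^d)}\le\bigo{(\nG\nN/2^d)\log(\nG\nN)}$, and since there are at most $2^d$ nodes at depth $d$, the total cost at depth $d$ telescopes to $\bigo{\nG\nN\log\nG\nN}$, independent of $d$; summing over the $\bigo{\log\nN}$ depths gives $\bigo{\nG\nN(\log\nG\nN)(\log\nN)}$.

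For the space, the same counting shows that holding the output arrays of all $2^d$ nodes at depth $d$ takes $\bigo{\nG\nN}$ space, and the scratch space of a single FFT call at that depth is no larger; keeping all $\bigo{\log\nN}$ levels live at once gives $\bigo{\nG\nN\log\nN}$. (As the footnote observes, freeing each subresult as soon as it is consumed — equivalently, reorganizing the recursion iteratively — leaves only a root-to-leaf path live, whose sizes form a geometric series summing to $\bigo{\nG\nN}$.) I expect the only real bookkeeping hazard to be the running-time step: one must apply the FFT bound in terms of the \emph{output} array size, observe that the per-depth cost does not grow with $d$ — precisely what the balanced grouping gains over the naive left-to-right fold of $\convolveSlow$ — and replace each $\log(\nG\nN/2^d)$ by the uniform bound $\log(\nG\nN)$ so the sum over depths is clean. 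Everything else is routine.
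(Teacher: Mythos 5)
Your proposal is correct and follows essentially the same route as the paper: the paper states the recurrence $T(\nN) = 2\,T(\nN/2) + \bigo{\nG\nN\log\nG\nN}$ and solves it over $\bigo{\log\nN}$ levels, which is exactly your recursion-tree accounting, and its space bound is likewise $\bigo{\nG\nN}$ per level times $\bigo{\log\nN}$ levels. Your version is somewhat more explicit where the paper is terse --- you prove correctness by induction on associativity rather than citing the FFT, and you justify why the per-depth cost does not grow with $d$ --- but these are elaborations of the same argument, not a different one.
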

\begin{proof}
The correctness of $\convolveFast$ is due to \citet{cooley1965algorithm}.
The recursion of $\convolveFast$ can be given as
\begin{equation}
    T(\nN) = 2\,T\!\left(\frac{\nN}{2}\right) + \bigo{\nG\nN \log \nG\nN}
\end{equation}
Solving this recursion, we call $T$ $\bigo{\log\nN}$ times.
Therefore, the time complexity of $\convolveFast$ is $\bigo{\nG\nN (\log \nG\nN) (\log \nN)}$.
Similarly, each call requires $\bigo{\nG\nN}$ space and $\convolveFast$ has a space complexity of $\bigo{\nG\nN \log \nN}$.
\end{proof}

\begin{cor}
For any two entries, $\vouta$ and $\voutb$, 
and test statistic $\teststatistic$ that factorizes as in \cref{eq:subtraction} with $\hscore$ and $\score$, $\permtest(\vouta, \voutb, \score, \hscore)$ returns $\pvalue$ in $\bigo{\nG\nN (\log \nG\nN) (\log \nN)}$ time, $\bigo{\nG \nN}$ space.\looseness=-1
\end{cor}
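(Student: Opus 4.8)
The plan is to obtain the corollary by instantiating the generic bound of \cref{thm:exact} with the FFT-based convolution routine $\convolveFast$ analyzed in \cref{thm:convolve-fast}. Recall that \cref{thm:exact} states $\permtest$ runs in $\bigo{\nG\nN + \runtime(\nG,\nN)}$ time and $\bigo{\nN + \spaceComp(\nG,\nN)}$ space, where $\runtime(\nG,\nN)$ and $\spaceComp(\nG,\nN)$ are the time and space costs of the $\nN$-fold convolution performed on \cref{line:convolve}. So the only work left is to plug in the costs of $\convolveFast$ and simplify; nothing new needs to be proved.

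First I would set $\runtime(\nG,\nN) = \bigo{\nG\nN(\log \nG\nN)(\log \nN)}$, as established in \cref{thm:convolve-fast}. Substituting into \cref{thm:exact}, the total time is $\bigo{\nG\nN + \nG\nN(\log \nG\nN)(\log \nN)}$; since $\log\nG\nN \ge 1$ and $\log\nN \ge 1$ for any nontrivial dataset (and for $\nN = 1$ no convolution is needed at all), the additive $\nG\nN$ term is dominated and the bound collapses to $\bigo{\nG\nN(\log \nG\nN)(\log \nN)}$, as claimed.

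For the space bound I would take $\spaceComp(\nG,\nN) = \bigo{\nG\nN}$ rather than the $\bigo{\nG\nN \log\nN}$ appearing in the statement of \cref{thm:convolve-fast}, appealing to the footnote of that theorem: the $\log\nN$ factor originates solely from the recursion-stack depth of $\convolveFast$, so tail-recursion elimination \citep{muchnick} (equivalently, reorganizing the $\nN$-fold convolution as an iterative pairwise reduction) removes it, leaving $\bigo{\nG\nN}$ working space. Combining with the $\bigo{\nN}$ overhead that \cref{thm:exact} attributes to the remaining lines of $\permtest$ gives $\bigo{\nN + \nG\nN} = \bigo{\nG\nN}$.

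I do not expect a real obstacle here: the argument is essentially bookkeeping built on the two preceding theorems. The one point that needs care is that the corollary quotes the \emph{post-elimination} space bound for $\convolveFast$, so the proof must explicitly invoke that footnote (or restate $\convolveFast$ iteratively) instead of citing the space bound exactly as written in \cref{thm:convolve-fast}; beyond that, everything follows by absorbing lower-order terms.
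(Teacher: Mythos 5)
Your proposal is correct and follows essentially the same route as the paper, which simply instantiates \cref{thm:exact} with $\convolveFast$ via \cref{thm:convolve-fast}. You are in fact more careful than the paper's own one-line proof: you correctly notice that the corollary's $\bigo{\nG\nN}$ space bound does not follow from \cref{thm:convolve-fast} as stated and must be justified by the tail-recursion-elimination footnote (or an iterative pairwise reduction), a step the paper leaves implicit.
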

\begin{proof}
The correctness and complexity bounds are due to \cref{thm:exact} and \cref{thm:convolve-fast}.
Specifically, \cref{line:convolve} can be executed using $\convolveFast$.
\end{proof}

\section{Experiments}\label{sec:experiment}
We demonstrate the efficiency of our exact algorithms by simulating paired-permutation tests between the accuracy of two systems.
In order to have some control over the $p$-value, $\nN$, and $\nG$ (maximum length of a sentence), we randomly generate our two system outputs from a measured distribution.
Specifically, we will use the Stanza\footnote{The code and pre-trained model are both freely accessible at \url{https://github.com/stanfordnlp/stanza}.} \citep{stanza} part-of-speech tag accuracy statistics when evaluating on the English Universal Dependencies (UD) test set \citep{ud}.
We sample our outputs from the normal distribution where the mean and standard deviation match the rates of Stanza's observed accuracy.
We further sample the length of each sample sentence according to the distribution of lengths in the test set.
These distributions are provided in \cref{tab:distributions}.

We show that, empirically, the exact test is \emph{more efficient} than the MC approximation; this is evinced in \cref{fig:runtime}
where we have compared the runtime of $\permtest$ using $\convolveSlow$ and $\convolveFast$ against $\montecarlo$ for various sample sizes ($\nK\!\in\!\Set{5000, 10000, 20000, 40000}$).$^,$\footnote{The experiment used an Apple M1 Max processor.}
We note that using $\convolveSlow$ is already more efficient than running $\montecarlo$ with $\nK=40000$ and $\nK=20000$ (up to $\nN\approx 8000)$.\footnote{We note that the average test set size of the $129$ UD treebanks we examined is just over $1000$ sentences, and only three treebanks had more than $6000$ sentences. 
}
Furthermore, $\convolveFast$ is \emph{much} faster and we observe a speed-up between $3$x and $30$x, depending on the number of samples $\nK$.
Indeed, using $\convolveFast$ allows us to perform an exact paired-permutation test for $\nN=10000$ in approximately one-tenth of a second.

\begin{table}[t]
    \centering
    \begin{tabularx}{0.47\textwidth}{lcc}
        \bf Metric & \bf Mean & \bf Standard Dev. \\ \midrule
        Accuracy & $0.9543$ & $0.1116$ \\
        Sentence length & $12.08$ & $10.60$
    \end{tabularx}
    \caption{Distributions for of accuracy and sentence length for POS tagging using Stanza \citep{stanza} on the English UD test dataset \cite{ud}.
    }
    \label{tab:distributions}
\end{table}

\section{Conclusion}
We presented an algorithm to compute the exact $\pvalue$-value of a paired-permutation test for the case of a family of structured test statistics, including the difference in accuracy.
Our algorithm runs in $\bigo{\nG\nN (\log \nG\nN) (\log \nN)}$ time and requires $\bigo{\nG\nN}$ space.
We empirically show that our exact algorithm is \emph{faster} than Monte Carlo approximation techniques.
The theory of our work is extensible to a more general class of test statistics which we discuss in \cref{app:doad}.
We hope that this work encourages the use of exact paired-permutation tests in future NLP research.

\section*{Ethical Concerns}
We foresee no ethical concerns in this work.

\section*{Acknowledgments}
We would like to thank the reviewers for their invaluable feedback and time spent engaging with our work. The first author is supported by the University of Cambridge School of Technology Vice-Chancellor's Scholarship as well as by the University of Cambridge Department of Computer Science and Technology's EPSRC.

\bibliography{acl}
\bibliographystyle{acl_natbib}

\clearpage
\onecolumn

\appendix

\section{Proof of Correctness of $\convolveSlow$}\label{app:dp}
We prove the correctness of $\convolveSlow$ using the following lemma.

\begin{lemma}
For any $\nN$ RVs $\RV_1,\dots,\RV_{\nN}$ with PMFs $\pmfn{1},\dots,\pmfn{\nN}$ respectively and $n\in\Set{1,\dots,\nN}$, $\convolveSlow(\pmfn{1},\dots,\pmfn{\nN})$ constructs $\mF{n}$ such that for any $\var\in\RVdom$,
\begin{equation}\label{eq:W}
    \mF{n}(\var) = \pmfn{:n}(\var) \defeq (\pmfn{1} \convolve \cdots \convolve \pmfn{n})(\var)
\end{equation}
\end{lemma}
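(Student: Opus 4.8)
The plan is to prove the lemma by induction on $n$, including the base case $n=0$ that is handled by the initialisation of $\convolveSlow$ in \cref{alg:convolve-slow}; the stated range $n \in \Set{1, \dots, \nN}$ then follows as a special case. For the base case, I would observe that after the two initialisation lines the array $\mF{0}$ satisfies $\mF{0}(\var) = \indicator{\var = 0}$, which is the identity element for discrete convolution (equivalently, the PMF of the constant random variable $0$), and is exactly the empty $0$-fold convolution $\pmfn{:0}$.

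For the inductive step, I would fix $n \ge 1$, assume $\mF{n-1}(\var) = (\pmfn{1} \convolve \cdots \convolve \pmfn{n-1})(\var)$ for all $\var \in \RVdom$, and then trace the two nested loops on \cref{line:for-start}--\cref{line:for-end}. After the $n$th outer iteration, the array holds, for each $y \in \RVdom$,
\begin{equation}
\mF{n}(y) = \sum_{\var \in \domain{\pmfn{n}}} \ \sum_{\var' \in \domain{\mF{n-1}}} \pmfn{n}(\var)\, \mF{n-1}(\var')\, \indicator{y = \var + \var'}.
\end{equation}
Since $\pmfn{n}$ and $\mF{n-1}$ vanish outside their respective (sparse) domains, I can replace these supports by all of $\RVdom$ without changing the value, obtaining $\mF{n}(y) = \sum_{\var \in \RVdom} \pmfn{n}(\var)\, \mF{n-1}(y - \var) = (\pmfn{n} \convolve \mF{n-1})(y)$; invoking the inductive hypothesis together with the commutativity and associativity of convolution then gives $\mF{n}(y) = (\pmfn{1} \convolve \cdots \convolve \pmfn{n})(y) = \pmfn{:n}(y)$, closing the induction.

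The hard part -- really the only place any care is needed -- is the domain bookkeeping: I would need to check that summing over the sparse supports $\domain{\pmfn{n}}$ (size at most $2$) and $\domain{\mF{n-1}}$ is equivalent to summing over all of $\RVdom$, and that every accumulator index $\var + \var'$ that receives a nonzero increment stays inside $\RVdom = \domain{\sumRV}$ so that no write falls outside the array. Both facts are immediate once one observes that $\RVdom$ is by construction closed under the additions performed and that all the omitted terms are zero; the remainder is just unfolding the definition of $\convolve$.
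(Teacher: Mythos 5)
Your proof is correct and follows essentially the same route as the paper's: induction on $n$, with the base case supplied by the initialisation $\mF{0}(\var)=\indicator{\var=0}$ and an inductive step that identifies the effect of the two nested loops with the convolution $\pmfn{n}\convolve\mF{n-1}$ restricted to the sparse supports. Your write-up is if anything slightly more careful about the direction of the change of variables (reading the loop's accumulation at index $\var+\var'$ forward into the convolution), but the substance is identical.
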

\begin{proof}
We prove this by induction on $\nN$.

\case{Base case} $\nN=1$.
We have that $\mF{0}(0)=1$ and $\mF{0}(\var)=0$ for all $\var\in\RVdom\setminus\Set{0}$.
Therefore, $\mF{1}(\varnStay)=\mF{1}(\varnSwap)=\frac{1}{2}$ and $\mF{1}(\var)=0$ for all $\var\in\RVdom\setminus\Set{\varnStay, \varnSwap}$ as expected.

\case{Inductive step}
Assume \cref{eq:W} holds for $\nN=n-1$.
Let $\nN=n$ and consider $\pmfn{:(n-1)}\convolve \pmfn{n}$.
\begin{equation}
    (\pmfn{:(n-1)}\convolve \pmfn{n})(\var) 
    = \monstersum{\var'\in\RVdom}\pmfn{:(n-1)}(\var')\pmfn{n}(\var-\var')
    = \monstersum{\var'\in\domain{\pmfn{n}}}\pmfn{:(n-1)}(\var + \var')\pmfn{n}(\var')
    = \monstersum{\var'\in\domain{\pmfn{n}}}\mF{n-1}(\var + \var')\pmfn{n}(\var')
\end{equation}
This is exactly the construction in the for-loop between \cref{line:for-start} and \cref{line:for-end}.
Therefore, $\mF{n}(\var) = \pmfn{:n}(\var)$.
\end{proof}

As each $\mF{\nN}$ will contain the $\nN$-fold convolution $\pmfn{1} \convolve \cdots \convolve \pmfn{n}$, $\convolveSlow$ is correct by definition.

\section{Paired-Permutation Test for Higher-order Test Statistics}\label{app:doad}
In this section, we extend our approach for the paired-permutation test to test statistics that are functions of $m$ additively
decomposable functions.
In symbols, this assumption means that we can rewrite $\teststatistic$ as follows
\begin{equation}\label{eq:higher}
    \teststatistic(\vouta,\voutb) \defeq \hscore\left(\boldsymbol{\score_1}(\vouta, \voutb),\dots,\boldsymbol{\score_m}(\vouta, \voutb) \right)
\end{equation}
for any function $\hscore$ and integer-valued, additive decomposable functions $\boldsymbol{\score_i}$ (i.e., $\boldsymbol{\score_i}(\vouta, \voutb)\defeq\sum_{n=1}^{\nN}\score_i(\outan, \outbn)$.
We now define $\varnStay$ and $\varnSwap$ as $m$-tuples,
\begin{align}
    \varnStay &\defeq \tuple{\boldsymbol{\score_1}(\vouta, \voutb),\dots,\boldsymbol{\score_m}(\vouta, \voutb)} \\
    \varnSwap &\defeq \tuple{\boldsymbol{\score_1}(\voutb, \vouta),\dots,\boldsymbol{\score_m}(\voutb, \vouta)}
\end{align}
And so each RV $\RVn$ has the same PMF as in \cref{eq:pmf}.
We can then define an analogous function to $\permtest$ for the case of $m$ additively decomposable functions.
We give pseudocode for this as $\permtestM$ in \cref{alg:permtest-general}.
The convolution algorithms,$\convolveSlow$ and $\convolveFast$, can both be used to perform for convolution step in \cref{line:higher:convolve}.

\begin{theorem}\label{thm:exact-higher}
For any two entries, $\vouta$ and $\voutb$, 
and test statistic $\teststatistic$ that factorizes as in \cref{eq:higher} with $\hscore$ and $\score_1$ to $\score_m$, $\permtestM(\vouta, \voutb, \score_1,\dots,\score_m, \hscore)$ returns $\pvalue$ in $\bigo{\nG^m\nN^m (\log \nG\nN) (\log \nN)}$ time and $\bigo{\nG^m \nN^m \log \nN}$ space.\looseness=-1
\end{theorem}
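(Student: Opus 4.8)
The plan is to transcribe the proof of \cref{thm:exact} together with its FFT corollary, reinterpreting every one-dimensional object as its $m$-dimensional counterpart and carefully tracking how the domain sizes grow; since $\permtestM$ is structurally identical to $\permtest$ (it forms the pairs $\varnStay,\varnSwap$, computes $\obs$, convolves, and accumulates), only the arithmetic changes. For \textbf{correctness}, the higher-order analogue of \cref{thm:statistic} holds by the same three-line argument: because each $\boldsymbol{\score_i}$ is additively decomposable, the vector $\sum_{n=1}^{\nN}\tuple{\score_1(\rvUn,\rvVn),\dots,\score_m(\rvUn,\rvVn)}$ equals $\sumRV=\sum_{n=1}^{\nN}\RVn$, where $\RVn$ is the $\integers^m$-valued RV whose PMF is \cref{eq:pmf} applied to the two $m$-tuples $\varnStay,\varnSwap$. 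Hence $\Prob{\teststatistic(\rvU,\rvV)}=\Prob{\hscore(\sumRV)}$, so the $\pvalue$-value equals $\sum_{\var\in\RVdom}\pmfn{\sumRV}(\var)\,\indicator{\hscore(\var)\ge\obs}$, and since the $\RVn$ are independent, $\pmfn{\sumRV}=\pmfn{1}\convolve\cdots\convolve\pmfn{\nN}$ with $\convolve$ now the $m$-dimensional discrete convolution — exactly what $\permtestM$ builds on \cref{line:higher:convolve}, after which the accumulation step analogous to \cref{line:p-accum} returns the $\pvalue$-value.

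For the \textbf{complexity}, the key bookkeeping step is that $\RVdom=\domain{\sumRV}$ has size $\bigo{(\nG\nN)^m}=\bigo{\nG^m\nN^m}$: each coordinate of $\RVn$ ranges over $\bigo{\nG}$ values, so after summing $\nN$ of them each of the $m$ coordinates of $\sumRV$ spans $\bigo{\nG\nN}$ values. Computing all $\varnStay,\varnSwap$ and $\obs$ costs $\bigo{\nN}$ (treating $m$ as constant), and the final accumulation over $\RVdom$ costs $\bigo{\nG^m\nN^m}$ time and $\bigo{1}$ additional space. For the convolution I would run $\convolveFast$ with the one-dimensional FFT replaced by the $m$-dimensional FFT: padding to an ambient box of side $\bigo{\nG\nN}$ in each of the $m$ dimensions (so $\bigo{(\nG\nN)^m}$ cells), a single convolution costs $\bigo{(\nG\nN)^m\log\!\bigl((\nG\nN)^m\bigr)}=\bigo{\nG^m\nN^m\log\nG\nN}$ time, since $\log\!\bigl((\nG\nN)^m\bigr)=m\log\nG\nN=\bigo{\log\nG\nN}$ for constant $m$. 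The recursion $T(\nN)=2\,T(\nN/2)+\bigo{\nG^m\nN^m\log\nG\nN}$ then unrolls over its $\bigo{\log\nN}$ levels to $\bigo{\nG^m\nN^m(\log\nG\nN)(\log\nN)}$ time; bounding every active recursion frame's array by the top-level size $\bigo{(\nG\nN)^m}$ and multiplying by the recursion depth gives $\bigo{\nG^m\nN^m\log\nN}$ space. Adding the cheaper preprocessing and accumulation leaves exactly the stated bounds.

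The step I expect to be the \textbf{main obstacle} is stating the $m$-dimensional FFT cost cleanly rather than anything conceptual: one must argue that the convolution output and every intermediate array can be zero-padded to a single box of side $\bigo{\nG\nN}$ per dimension, so the ambient grid has $\bigo{(\nG\nN)^m}$ cells, and that the logarithm of the grid size collapses to $\bigo{\log\nG\nN}$ once $m$ is held constant; everything else is a routine transcription of the proofs of \cref{thm:exact} and its corollary with $\nG\nN$ replaced by $(\nG\nN)^m$.
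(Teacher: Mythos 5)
Your proposal is correct and follows essentially the same route as the paper: correctness by transcribing Theorem~\ref{thm:exact} (and Theorem~\ref{thm:statistic}) with $m$-tuple-valued local effects and $m$-dimensional convolution, and the complexity from an $m$-dimensional FFT costing $\bigo{\nG^m\nN^m\log\nG\nN}$ per convolution over $\bigo{\log\nN}$ recursion levels. You actually fill in details the paper leaves implicit (the $(\nG\nN)^m$ domain size and the collapse of $\log\!\bigl((\nG\nN)^m\bigr)$ to $\bigo{\log\nG\nN}$ for constant $m$), and you avoid the paper's typographical slip of writing $\bigo{\nG^4\nN^4}$ for the per-convolution space.
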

\begin{proof}
The proof of correctness for $\permtestM$ is the same as \cref{thm:exact}.
The expensive operation in the algorithm is the convolution step (\cref{line:higher:convolve}). 
We can perform a FFT $m$-dimensional convolution in $\bigo{\nG^m \nN^m \log \nG \nN}$ time and $\bigo{\nG^4 \nN^4}$ space.
As we require $\bigo{\log \nN}$ convolution steps, $\permtestM$ runs in $\bigo{\nG^m\nN^m (\log \nG\nN) (\log \nN)}$ time and $\bigo{\nG^m \nN^m \log \nN}$ space.
\end{proof}

\begin{algorithm}[t!]
    \centering
    \begin{algorithmic}[1]
    \Func{$\permtestM(\vouta, \voutb, \score_1,\dots,\score_m, \hscore)$}
    
    \For{$n=1 \textbf{ to } \nN$}
        \State $\varnStay \gets \tuple{\score_1(\outan, \outbn), \ldots, \score_m(\outan, \outbn)}$ \Comment{Local effect (stay)} \vspace{2pt}
        \State $\varnSwap \gets \tuple{\score_1(\outbn, \outan), \ldots, \score_m(\outbn, \outan)}$ \Comment{Local effect (swap)}
        \State $\pmfn{n} \gets \pmfvar\left(\varnStay, \varnSwap \right)$ 
    \EndFor
   
    \State $\obs \gets \Hscore{\boldsymbol{\score_1}(\vouta, \voutb),\dots,\boldsymbol{\score_2}(\vouta,\voutb)}$  \Comment{Compute observed effect}  
    \State $\pmfn{\sumRV} \gets \pmfn{1} \ast  \cdots \ast \pmfn{\nN}$ \Comment{Convolve the $f_n$'s} \label{line:higher:convolve}
    \LineComment{Sum-up the pmf to get \pvalue}
    \State \Return ${\displaystyle \sum_{\var \in \RVdom} \pmfn{\sumRV}(\var) \,\indicator{\Hscore{\var} \ge \obs\,}}$  \label{line:higher:p-accum}
    \EndFunc
\end{algorithmic}
\caption{Pseudocode to find exact $\pvalue$ value for the paired-permutation test for test statistics comprised of $m$ additively decomposable functions.}
\label{alg:permtest-general}
\end{algorithm}

\paragraph{Example.}
A common example for a test statistic that requires multiple additively decomposable functions is the difference in $\fone$ scores.
Similar to accuracy,
each entry $\outan\in\Set{1,\dots,C}^\nG$ and $\nG$ is the maximum length of an entry sequence.
Let $\truep{\outan}$ and $\false{\outan}$ be the number of true positive and incorrect predictions made in entry $\outan$ respectively.
Then the difference in $\fone$ scores can be given as
\begin{equation}
    \teststatistic(\vouta, \voutb) \defeq \frac{\sum_{n=1}^{\nN}\truep{\outan}}{\sum_{n=1}^{\nN}\truep{\outan} +\frac{1}{2} \sum_{n=1}^{\nN}\false{\outan}} - \frac{\sum_{n=1}^{\nN}\truep{\outbn}}{\sum_{n=1}^{\nN}\truep{\outbn} +\frac{1}{2} \sum_{n=1}^{\nN}\false{\outbn}}
\end{equation}
We can therefore use four-additively decomposable functions, $\boldsymbol{\score_1}$ to $\boldsymbol{\score_4}$, that decompose such that $\score_1(\outan, \outbn)= \score_3(\outbn, \outan)=\truep{\outan}$ and $\score_2(\outan, \outbn)= \score_4(\outbn, \outan)=\false{\outan}$.
Our $\hscore$ function then takes four arguments and can be defined as
\begin{equation}
    \hscore(x_1, x_2, x_3, x_4) \defeq \frac{x_1}{x_1 + \frac{1}{2}x_2} - \frac{x_3}{x_3 + \frac{1}{2}x_4}
\end{equation}
We can additionally apply an absolute value to $\hscore$ to check for the absolute difference in $\fone$ scores; doing this would make the significance test two-tailed rather than one-tailed. 

\end{document}